\numberwithin{equation}{section}
\begin{document}

\Year{2019} %
\Month{September}
\Vol{60} %
\No{1} %
\BeginPage{1} %
\EndPage{XX} %
\AuthorMark{Guo L {\it et al.}}
\ReceivedDay{November, 2018}
\AcceptedDay{July, 2019}

\title{Subsampling Bias and The Best-Discrepancy Systematic Cross Validation}{This paper has been accepted for publication by SCIENCE CHINA Mathematics}

\author[1]{Liang Guo}{}
\author[2]{Jianya Liu}{Corresponding author}
\author[3,4]{Ruodan Lu}{}

\address[{\rm1}]{Data Science Institute, Shandong University, Weihai {\rm 264209}, China;}
\address[{\rm2}]{Data Science Institute,  Shandong University, Jinan {\rm 250100}, China;}
\address[{\rm3}]{School of Architecture, Building and Civil Engineering, Loughborough University, Loughborough {\rm LE11 3TU}, UK;}
\address[{\rm4}]{Darwin College, University of Cambridge, Cambridge {\rm CB3 9EU}, UK}
\Emails{liangguo@sdu.edu.cn,jyliu@sdu.edu.cn, rl508@cam.ac.uk}\maketitle


 {\begin{center}
\parbox{14.5cm}{\begin{abstract}
Statistical machine learning models should be evaluated and validated before putting to work. Conventional $k$-fold Monte Carlo Cross-Validation (MCCV) procedure uses a pseudo-random sequence to partition instances into $k$ subsets, which usually causes subsampling bias, inflates generalization errors and jeopardizes the reliability and effectiveness of cross-validation. Based on ordered systematic sampling theory in statistics and low-discrepancy sequence theory in number theory, we propose a new $k$-fold cross-validation procedure by replacing a pseudo-random sequence with a best-discrepancy sequence, which ensures low subsampling bias and leads to more precise \textit{Expected-Prediction-Error} (\textit{EPE}) estimates. Experiments with 156 benchmark datasets and three classifiers (logistic regression, decision tree and na{\"{\i}}ve bayes) show that in general, our cross-validation procedure can extrude subsampling bias in the MCCV by lowering the \textit{EPE} around 7.18\% and the variances around 26.73\%. In comparison, the stratified MCCV can reduce the \textit{EPE} and variances of the MCCV around 1.58\% and 11.85\% respectively. The Leave-One-Out (LOO) can lower the \textit{EPE} around 2.50\% but its variances are much higher than the any other CV procedure. The computational time of our cross-validation procedure is just 8.64\% of the MCCV, 8.67\% of the stratified MCCV and 16.72\% of the LOO. Experiments also show that our approach is more beneficial for datasets characterized by relatively small size and large aspect ratio. This makes our approach particularly pertinent when solving bioscience classification problems. Our proposed systematic subsampling technique could be generalized to other machine learning algorithms that involve random subsampling mechanism.\vspace{-3mm}
\end{abstract}}\end{center}}

 \keywords{Subsampling Bias, Cross Validation, Systematic Sampling, Low-Discrepancy Sequence, Best-Discrepancy Sequence}

 \MSC{62-07; 11J71; 62G09; 68T05}

\renewcommand{\baselinestretch}{1.2}
\begin{center} \renewcommand{\arraystretch}{1.5}
{\begin{tabular}{lp{0.8\textwidth}} \hline \scriptsize
{\bf Citation:}\!\!\!\!&\scriptsize Guo L, Liu J, Lu R.  Sci China Math, 2019. doi: xxxxxxxxx\vspace{1mm}
\\
\hline
\end{tabular}}\end{center}

\baselineskip 11pt\parindent=10.8pt  \wuhao

\section{Introduction}
All statistical machine learning processes start with a basic truism: garbage in, garbage out. This means that nonsense input data usually produce nonsense conclusions. However, ``garbage in, garbage out" is not only about data quality, but also about the way in which data scientists evaluate and choose statistical machine learning models. Such evaluation often relies on a Cross-Validation (CV) to estimate model generalization performance metrics. The holdout procedure is the simplest kind of CV, in which a data set is split into a training set and a test set. However, the holdout is not reliable as the evaluation depends heavily on which data points end up in the training set and which in the test set \cite{Kohavi1995}.  Thus, the evaluation may be significantly different depending on how the subsampling is made \cite{Kohavi1995}. The $k$-fold Monte-Carlo CV (MCCV), the most popular CV procedure, is an effective way to reduce the subsampling bias in the holdout procedure. Given a dataset of $n$ instances, an MCCV procedure is to randomly generate $k$ subsets, each of which contains $m$ instances, where $n=km$. That is to say, each instance is first assigned an index between 1 and $n$ to construct a subsampling frame. Using the Simple Random Subsampling (SRS) method, the MCCV procedure then uses a Pseudo-Random Number Generator (PRNG) to generate a random integer sequence. This sequence consists of $n$ distinct, randomly ordered integers between 1 and $n$, each of which refers to an element of the subsampling frame. Next, the random integer sequence is split into $k$ equal-sized parts and the corresponding instances are partitioned into $k$ subsets. Subsequently, $k$ iterations of training and testing are performed such that in each iteration, a different subset is held out to estimate the \textit{EPE} while the remaining $k-1$ subsets are used for training \cite{Stone1974}.

However, relying on the SRS technique to subsample instances to training and test sets not only generates high computational costs, but also poses risks \cite{Braga-Neto2004,Molinaro2005}. The inductive bias of a model and the subsampling bias in the SRS may be confounded and lead to inflated \textit{EPE}s \cite{Molinaro2005}. This is especially true for small, imbalanced, and high dimensional datasets \cite{Fu2005}. The random integer sequence used in the SRS poses a threat to the MCCV's validity and reliability, as the sequence may seem to be random, but not truly random and not truly uniformly distributed. This is because the elements in such sequence often clump at certain arbitrary intervals \cite{Gentle2003}. When performing SRS, similar instances could be included in one subset, meaning that information may be over- (clusters) or under-represented (voids) in that subset. This is just like the random shuffle function of \textit{Spotify} (a music player App) cannot ensure songs from a given artist are spread throughout the playlist. To avoid the risk of obtaining dissimilar subsets, the MCCV procedure should be repeated $p$ times (usually $p=50$) with different random seeds to circumvent the risk of obtaining greatly dissimilar subsets \cite{Molinaro2005}, as there are trillions of ways to partition $n$ instances into $k$ subsets. 

The reliability of the MCCV increases as the $k$ and $p$ increase. When $k=n$, the MCCV becomes the Leave-One-Out CV (LOO). In practice, the LOO is not widely used because it is more computationally demanding and has a larger variance than the MCCV \cite{Kohavi1995}. The stratified MCCV procedure is another mean to alleviate the subsampling bias issue \cite{Pedregosa2011}. The stratified MCCV procedure starts by dividing a dataset into non-overlapping strata based on its class label vector. It then randomly and proportionally partitions instances into $k$ subsets from each stratum. While this purposive sampling method ensures the same distribution of class values in all subsets, it may not ensure the same distributions of feature values. Thus, subsampling bias problem may remain unsolved. A low subsampling bias is crucial because no matter how sophisticated a statistical machine learning algorithm is, the resulting model may be biased if its training sets fail to represent the idiosyncrasies of the complete set \cite{Braga-Neto2004,Molinaro2005}. Likewise, subsampling bias and insufficiently informative test data will inflate the \textit{EPE}s, jeopardizing the validity and reliability of a CV procedure \cite{Fu2005}. We may run the risk of choosing a biased model and drawing a wrong conclusion. 

This study follows the Ordered Systematic Subsampling (OSS) theory \cite{Lohr2009} to propose a simple yet effective method, the best-discrepancy systematic subsampling method for $k$-fold CV (hereafter the BDSCV), which ensures a smaller subsampling bias than the MCCV, stratified CV and LOO. The contribution of our proposed systematic subsampling method lies in the fact that we make a step further to reduce subsampling bias than the conventional systematic subsampling one. That is, the first subsampling instance is not chosen at random as in the conventional OSS method. Instead, it is chosen based on the Best-Discrepancy Sequence (BDS), a sequence that is a uniformly distributed sequence, finite or infinite, in which the number of any proportion of its elements falling into an arbitrary interval proportionally converts to the distance of the interval \cite{Niederreiter1992}. Our subsampling interval is also determined by the BDS. 

This article is organised as follows. We first introduce the theories of the low discrepancy sequence (LDS) and the BDS. 
We then illustrate subsampling traps in conventional $k$-fold CV. Next, we propose the best-discrepancy systematic subsampling method and the BDSCV procedure. We conduct experiments to provide empirical supports to our theories. Finally, we discuss our proposed procedure and its limitations.

\section{Methods}

\subsection{Low-discrepancy sequences and best-discrepancy sequences} 
For brevity, we restrict our discussion to the key definitions of low-discrepancy sequence (LDS) and best-discrepancy sequence (BDS), 
which are necessary to understand our procedure. We refer interested readers to the comprehensive reviews of \cite{Kuipers1974}, 
\cite{HuaWan1978} and \cite{Dick2013}. 

Let $\mathscr X$ denote a sequence $x_1,x_2,\dots,x_n,\dots$ in the unit interval $[0,1)$. The discrepancy $D_N$ of its consecutive $N$ elements is defined by 
\begin{eqnarray}\label{E1.1} 
D_N=\sup_{0\leq a<b<1} \left|\frac{C([a, b); N)}{N}-(b-a)\right|, 
\end{eqnarray}
where the counting function $C([a,b);N)$ denotes the number of element $x_n$ with $1\leq n\leq N$ for which $x_n\in [a,b)$ \cite{Kuipers1974}.   
This $\mathscr X$ is said to be an LDS if $D_N$ is low for all sufficiently large integers $N$.  
The sequence $\mathscr X$ is uniformly distributed in $[0,1)$ if $D_N\rightarrow 0$ as $N\rightarrow\infty$ \cite{Kuipers1974}. In simple terms, the sequence $\mathscr X$ is uniformly distributed if every subinterval $[a,b)$ eventually obtains its proper share in $[0,1)$. A sequence's level of uniformity is measured by its discrepancy function $D_N $. A classical theorem of Chung \cite{Chung2013} states that, in general, the PRNG does not produce a sequence with $D_N $ lower than $N^{-1/2} $. In any case, the order of the discrepancy $D_N $ cannot be lower than $N^{-1}\log N$ by \cite{Schmidt1972}. Thus, the best we can hope for $D_N$ is something like 
\begin{eqnarray}\label{E1.2} 
D_N\leq cN^{-1+\varepsilon}
\end{eqnarray}
where $\varepsilon>0$ is arbitrarily small, and $c>0$ is some constant. This explains the following definition.   

\begin{definition}\quad
A sequence $\mathscr X$ is called a BDS, if its discrepancy function $D_N$ satisfies \eqref{E1.2} 
for arbitrary $\varepsilon>0$ and some constant $c>0$.  
\end{definition}

In the following we will present some BDSs that are produced by methods from  
number theory. Of particular interests to this paper are those BDSs of the form 
\begin{eqnarray}\label{Def/na}
\{n \alpha\}, \ n=1, 2, 3, \dots
\end{eqnarray} 
where $\{\alpha\}$ denotes the fractional part of a real number $\alpha$. 
Plainly each term of \eqref{Def/na} falls into the unit interval $[0, 1)$. Denote by ${\mathscr X}(\alpha)$ the sequence \eqref{Def/na}. 

\begin{theorem}\label{Thm2.2}\quad 
The sequence ${\mathscr X}(e)$ is a BDS,  where $e$ is the base of natural logarithm. 
\end{theorem}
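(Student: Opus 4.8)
The plan is to reduce the estimation of the discrepancy $D_N$ of $\mathscr{X}(e)=\{ne\}$ to arithmetic information about the continued fraction expansion of $e$. It is a classical fact in the theory of uniform distribution modulo one (see \cite{Kuipers1974}) that for any irrational $\alpha$ with convergent denominators $q_0<q_1<q_2<\cdots$, the discrepancy of the sequence $\{n\alpha\}$ obeys a bound of the shape
\begin{eqnarray}\label{EplanCF}
N D_N \le c_0 \sum_{k=0}^{m} a_{k+1}, \qquad q_m\le N<q_{m+1},
\end{eqnarray}
where $a_1,a_2,\dots$ are the partial quotients of $\alpha$ and $c_0$ is an absolute constant. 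Thus everything hinges on (i) how fast the partial quotients $a_k$ of $e$ grow and (ii) how fast the denominators $q_k$ grow, the latter controlling the number $m$ of terms in the sum.

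First I would invoke the explicit continued fraction expansion of $e$, due to Euler, namely $e=[2;1,2,1,1,4,1,1,6,1,1,8,\dots]$, whose partial quotients follow the periodic block pattern $1,2j,1$ and in particular satisfy $a_k\le k$ for all $k\ge 1$; hence $a_k=O(k)$. Next I would bound the growth of the denominators from below using the recurrence $q_k=a_kq_{k-1}+q_{k-2}\ge q_{k-1}+q_{k-2}$, which forces $q_k\ge F_k$, the $k$-th Fibonacci number, and therefore $q_k\gg \gamma^k$ with $\gamma=(1+\sqrt5)/2$. Consequently the index $m$ determined by $q_m\le N<q_{m+1}$ satisfies $m=O(\log N)$.

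Combining these two inputs in \eqref{EplanCF} gives
\begin{eqnarray}\label{Eplanfinal}
N D_N \le c_0\sum_{k=0}^{m} a_{k+1} \le c_0\sum_{k=1}^{m+1} k = O(m^2)=O\big((\log N)^2\big),
\end{eqnarray}
so that $D_N=O\big((\log N)^2/N\big)$. Since $(\log N)^2=O(N^{\varepsilon})$ for every $\varepsilon>0$, this yields $D_N\le cN^{-1+\varepsilon}$ and proves that $\mathscr X(e)$ is a BDS in the sense of the Definition. The routine verification I am skipping is the precise constant and the passage from the supremum over all subintervals to the one-sided discrepancy; the genuine content, and the main obstacle, is securing the continued-fraction bound \eqref{EplanCF} together with the exact shape of the expansion of $e$. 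Indeed, it is precisely the slow (merely polynomial) growth of the partial quotients of $e$, set against the exponential growth of the $q_k$, that makes the $N^{-1+\varepsilon}$ saving possible; any $\alpha$ with much larger partial quotients, such as a Liouville number, would fail to be a BDS.
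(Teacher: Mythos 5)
Your argument is correct, and it reaches the conclusion by a genuinely different route from the paper. The paper's proof is essentially a citation: it invokes Baker's theorem \cite{Baker1965} on Diophantine inequalities involving the exponential function, together with Mahler's refinement \cite{Mahler1975} making the constant effectively computable, following the treatment in \cite{HuaWan1978}. That machinery is transcendence-theoretic and applies more broadly (e.g.\ to rational powers of $e$). You instead exploit the one feature of $e$ for which no transcendence theory is needed: Euler's explicit continued fraction $e=[2;1,2,1,1,4,1,1,6,\dots]$, whose partial quotients grow only linearly, combined with the standard Ostrowski-type discrepancy bound $ND_N\ll\sum_{k\le m}a_{k+1}$ from \cite{Kuipers1974} and the Fibonacci lower bound $q_k\ge F_k$, which forces $m=O(\log N)$. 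This is more elementary and in fact delivers a sharper conclusion, $D_N=O\bigl(N^{-1}(\log N)^2\bigr)$, which trivially implies the bound $cN^{-1+\varepsilon}$ required by the definition of a BDS. The trade-off is generality and effectivity: your argument is tied to the special arithmetic of $e$ itself, whereas the Baker--Mahler route extends to $e^{p/q}$ and is what the paper leans on to assert that the dependence of $c$ on $\varepsilon$ is effectively computable. Both proofs are valid; yours is self-contained modulo two classical facts and gives a quantitatively stronger estimate, while the paper's is shorter but rests on much deeper results.
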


\begin{proof}\quad 
It follows from \cite{Baker1965} that \eqref{E1.2} holds for ${\mathscr X}(e)$ with the constant $c$ therein depending only on 
$\varepsilon$, and by \cite{Mahler1975} the dependence of $c$ on $\varepsilon$ is effectively computable. 
See also the arguments in \cite[Chap.~4, \S5]{HuaWan1978}.  
\end{proof}

\begin{remark}\quad
Of course the sequence ${\mathscr X}(\alpha)$ can be a BDS for many other $\alpha$. For example, 
it is known \cite[Chap.~4, \S5]{HuaWan1978} that if $\alpha$ is real algebraic, 
then the sequence ${\mathscr X}(\alpha)$ is a BDS. Examples of these $\alpha$ include 
$\sqrt{p}$ with $p$ ranging over all prime numbers $p$. Prior studies also propose other ways to generate BDS, 
for example the Halton sequence \cite{Halton1964} and the van der Corput sequence \cite{van der Corput1935}. 
Note that our sequence ${\mathscr X}(e)$ in Theorem~\ref{Thm2.2} belongs to none of the above examples since $e$ is transcendental. 
In this paper we have followed the suggestion of Hua and Wang \cite[Chap.~4, \S5]{HuaWan1978} to use ${\mathscr X}(e)$ rather than those previous sequences, and our numerical experiments support their claim that ${\mathscr X}(e)$ is of important practice implications. 
\end{remark}

The discrepancy properties of LDS/BDS have attracted a lot of attentions \cite{Dick2015}. Indeed, point sampling is an important research area in computational mathematics. For example, Xu and Zhou \cite{Xu2014} propose a deterministic method to produce the interpolation points and show its outstanding performance with $\mathscr L$1 minimization. Previous studies also report that the LDS can accelerate the convergence for Monte Carlo quadrature formulas \cite{Niederreiter1992} and its points are correlated to provide greater uniformity \cite{Struckmeier1995}. The LDS has been applied in many fields other than mathematics, including optimization \cite{Georgieva2010, Kucherenko2005, Pant2008, Uy2007}, machine learning \cite{Bergstra2012, Bergstra2011, Cheng2000}, computer graphics and vision \cite{Kollig2002, Keller1996, Li2003, Quinn2007, Wenzel2001}, finance \cite{Boyle1997, Paskov1995, Singhee2007, Tan2000}, and engineering \cite{Branicky2001, Dai2009, Kalagnanam1997, Lindermann2003}. Our study is inspired from these studies, although they do not directly address CV procedures. We develop the best-discrepancy systematic subsampling method and the BDSCV procedure based on the BDS. 

\subsection{Numerical remarks on Section~2.1}  

Panel A-C in Figure 1 shows the scatter plots of three random sequences, each of which contains 500 elements. The three sequences contain clear clumps and voids so that it is highly possible that the distance between two consecutive random elements may be either very small or very large. To formally compute the $D_N $ of each sequence, we randomly choose 50 pairs of interval $[a,b) $ and compute the discrepancy to uniform distribution for these two sequences using \eqref{E1.1} respectively. Results are shown in Figure 2, from which we find that the average $D_N$ and the variance of 50 measures of each random sequence are quite large. That is to say, these three sequences are not quite uniformly distributed. 

\medskip 

\centerline{Figure 1 Here.}  

\medskip 

\centerline{Figure 2 Here.}  

\medskip 

Panel D in Figure 1 illustrates a BDS of 500 elements. The interval between two consecutive elements of a BDS is determined by the property of uniformity so that the elements scatter evenly to avoid clumps and voids. 
We formally compute the $D_N $ of BDS by randomly choosing 50 pairs of interval $[a,b)$ using \eqref{E1.1}.  
Figure 2 shows that the resulting discrepancy of BDS is close to zero and the variance of the 50 measures of BDS is greatly smaller than that of its random counterpart. That is, unlike the elements of a random number sequence, the elements of a BDS are as uniformly distributed as possible. Successive elements are inserted in a position as far as possible from the other elements in order to avoid clustering and voids. The elements generated sequentially fill in the larger gaps between the previous elements of the sequence \cite{Cheng2000}. 

Random sequences cause subsampling bias in $k$-fold CV. To illustrate subsampling traps, we generate an artificial dataset of 16 instances with four binary features and one label vector with 16 classes (i.e. each combination of the values of the four binary features corresponds to one class). We duplicate this dataset five times and form a complete set of 80 instances. A logistic regression model is used to map out the relations between features and labels. This should be an easy task, as the classification boundary of this dataset are perfectly clear. Therefore, if the 80 instances are well partitioned into representative subsets, then the error rate of both Holdout and CV procedure should be zero. 

We conduct $k$-fold CV with the logistic regression model, where $k$ ranges from two to 10. Each $k$-fold CV procedure is repeated 50 times with different random seeds to circumvent the risk of obtaining largely dissimilar subsets. Results are shown in Figure 3. The error rate of the $k$-fold CV is surprisingly high and the error rate drops when the number of $k$ increase. However, the error rates are always larger than zero. The box plots also suggest that there are large variations over 50 repetitions. This illustration vividly demonstrates that there are actually traps in conventional $k$-fold CV. A major threat to the validity and reliability of $k$-fold CV lies in subsampling bias by the clumps and voids illustrated in Figure 1, which largely jeopardize the representativeness of subsets. Results of model evaluation and selection could be problematic. 

\medskip 

\centerline{Figure 3 Here.}  

\medskip 

\subsection{The Best-Discrepancy Systematic Subsampling Method} 
We propose a systematic subsampling method based on the uniformity property of the BDS. The theoretical foundation of our proposed method stems from sampling theory, which proves that the ordered systematic subsampling (OSS) method can obtain smaller sampling error than the SRS and the stratified sampling \cite{Lohr2009}. The OSS method works as follows. A complete dataset should be firstly sorted either ascendingly or descendingly so that adjacent elements tend to be more similar than elements that are farther apart. Then, a fixed subsampling interval (also called a ``skip") $k$ is defined as $k=n/m$, where $n$ is the dataset size and $m$ is the subset size. The subsampling starts by selecting an element from the ordered complete dataset at random and then every $k^{\rm th}$ element is selected until a subset of $m$ elements is formed. The ordered systematic subsampling outperforms the SRS because the former can avoid choosing two elements adjacent to each other and the resulting subsets are forced to compose of various values, meaning that the discrepancy between subsets caused by subsampling bias can be reduced \cite{Lohr2009}.

We improve the OSS by replacing the fixed subsampling interval with a BDS.  Suppose $D=(d_1,\dots,d_n)$ is a one-dimensional dataset with $n$ instances. We use this vector notation to emphasise the importance of the order of $D$. Let ${\widetilde D}=({\widetilde d}_1,\dots,{\widetilde d}_n)$ be the sorted $D$, meaning that $\widetilde D$ is a permutation of the coordinates of $D$, so that ${\widetilde d}_1<\dots<{\widetilde d}_n$. This $\widetilde D$ is to be divided into $k$ subsets, each of which contains $m$ instance (i.e. $n=km$). To do so, an integer sequence derived from a BDS is needed that refers to the subsampling frame of a dataset. 

Let $X=(x_1,\dots,x_n)$ be the vector of the first $n$ elements of ${\mathscr X}(e)$. Let $\widetilde X=({\widetilde x}_1,\dots,{\widetilde x}_n)$ be the sorted $X$, meaning that $\widetilde X$ is a permutation of the coordinates of $X$, so that ${\widetilde x}_1<\dots<{\widetilde x}_n$, The ranking vector $R=(r_1,\dots,r_n)$ of $X$ is such that ${\widetilde x}_{r_j}=x_j$ for $j=1,\dots,n$. Then, given $n=km$, we can chunk $R$ into $k$ disjoint subsets of the same size $m$, that is 
\begin{eqnarray}\label{Def/R=}
R=\bigcup_{1\leq j\leq k}R_j
\end{eqnarray}
where $R_1=(r_1, \ldots, r_m), R_2=(r_{m+1}, \ldots, r_{2m})$, etc. 
This $R$ inherits the property of uniformity. We can use $R$ to partition $\widetilde D$ into $k$ subsets $\widetilde D=\cup_{1\leq j\leq k}{\widetilde D}_j$, where for $j=1,\dots,k$,
\begin{eqnarray}\label{E2.4}
{\widetilde D}_j=({\widetilde d}_r: r\in R_j). 
\end{eqnarray}
Note that each subset ${\widetilde D}_j$ is written as a vector for the same reason.    

Our proposed best-discrepancy systematic subsampling method can make sure the instances in a subset as heterogeneous as possible while different subsets as homogenous as possible. According to the theory of analysis of variance \cite{Lohr2009}, we know that the \textit{Sum of Squares of Total} (\textit{SSTO}) is decomposed into the \textit{Sum of Squares Within subsets} (\textit{SSW}) plus the \textit{Sum of Squares Between subsets}(\textit{SSB}). The level of heterogeneity of instances within subsets can be measured by the \textit{Intra-class Correlation Coefficient} (\textit{ICC}) \cite{Lohr2009},
\begin{eqnarray}\label{E2.5}
ICC=1-\frac{km}{km-1}\frac{SSW}{SSW+SSB},
\end{eqnarray}
where
\begin{eqnarray}\label{E2.6}
SSW=\sum_{i=1}^{k}\sum_{j=1}^{m}(d_{ij}-{\overline d}_i),
\end{eqnarray}
\begin{eqnarray}\label{E2.7}
SSB=n\sum_{i=1}^{k}({\overline d}_i-\overline d),
\end{eqnarray}
\begin{eqnarray}\label{E2.8}
SSTO=SSW+SSB.
\end{eqnarray}
And Equation \eqref{E2.5} can be transformed as 
\begin{eqnarray}\label{E2.9}
SSW=\frac{(1-ICC)(n-1)SSTO}n.
\end{eqnarray}

Equation \eqref{E2.9} implies that \textit{SSW} becomes larger when \textit{ICC} becomes negative (i.e. the instances within a subset are dispersed more than a randomly chosen subset would be). Prior study \cite{Lohr2009} proves that  to have a negative \textit{ICC}, the dataset should be ordered, so that it exhibits a positive autocorrelation (i.e. subsequent instances have similar or the same values) and instances should be chosen with a skip from the ordered dataset so that a subset is forced to compose of both small values and large values. The best-discrepancy systematic subsampling method follows exactly these guidelines to allocate instances. In addition, it is important to mention that the subsampling interval is naturally and uniformly set by the uniformity property of a BDS. The subsampling interval of the BDS guarantees that a subset is composed of heterogeneous instances. 

\subsection{Numerical remarks on Section~2.3}  
We illustrate the relations among the BDS, \textit{ICC} and \textit{SSW} in the following example. Suppose a one-dimensional dataset $D$ is given, and the 21 instances of $D$ are to be partitioned into three subsets with \eqref{E2.4}. We use ${\mathscr X}(e)$ to generate a tiny BDS using $e$'s 38 digits after the decimal place (see the first column of Table 1). The property of uniformity of the BDS ensures that the distribution of values in a subset should be the same or close to that of the complete set, and the between-subset variance should equal or be close to zero. For example, the second column of Table 1 demonstrates that the variances of the three subsets $(\sigma _1^{2}=\sigma _2^{2}=\sigma _3^{2}=0.08)$ are all close to that of the complete set $\sigma _X^{2}=0.08$, provided that the complete set is evenly chunked into three subsets using the BDS. The third and fourth columns of Table 1 show that the ranking vector $R$ of the BDS exhibits the same uniformity property as the means of the three subsets are quite close (10, 10, and 12 respectively). 

Then, $D$ is ascendingly sorted as $\widetilde D$ so that adjacent instances tend to be more similar than instances that are farther apart. The 21 instances are partitioned into three subsets with \eqref{E2.4} (see the fifth to seventh columns of Table 1). The three subset variances are quite close to that of $D$ $(\sigma _D^{2}=0.01)$ and the \textit{SSB} is almost zero (0.0082). As expected, the \textit{ICC} is negative ($-0.1242$) and the \textit{SSW} is 0.2168. The instances with adjunct or the same values (for example, the $9^{\rm th}, 10^{\rm th}$ and $11^{\rm th}$ elements of $D$) are forced to spread out between different subsets. The ranking vector $R$ makes the distributions of values highly similar among three subsets, as none of them consists of unrepresentative (i.e. mainly low or high) values. 

To compare, we use the SRS to split $D$ into three subsets and repeat the procedure 50 times. The averages of \textit{SSB} of these SRS procedures is 0.0208, which is 2.531 times larger than that of the best-discrepancy systematic subsampling method. This experiment shows that subsampling bias can be greatly reduced by the proposed method, even though the sample size of this tiny dataset is only 21. We conduct an additional experiment with 500 datasets of different sizes (ranging from 54 to 999) and with different levels of variance (ranging from 0 to 0.025), skewness (ranging from 0 to 0.116), and kurtosis (ranging from 0.506 to 31.398). The results suggest that regardless the characteristics of a dataset, the \textit{SSB} of the best-discrepancy systematic subsampling method is much smaller (23.16\%) than that of the SRS. 

\medskip 

\centerline{Table 1 Here.}  

\medskip

 In short, this example proves that the best-discrepancy systematic subsampling method can satisfy Lohr's \cite{Lohr2009} two conditions that make the \textit{ICC} negative. In contrast, the SRS cannot guarantee all subsets are of heterogeneous instances. Indeed, there are too many possibilities to randomly generate $k$ subsets. It is then difficult to avoid ``bad'', quasi-homogeneous subsets with only or mainly small or large values. Therefore, the best-discrepancy systematic subsampling procedure outperforms the SRS in terms of subset representativeness.

\newpage 
\section{The BDSCV Procedure}We propose a $k$-fold CV procedure that is built upon the best-discrepancy systematic subsampling procedure. The pseudocode of BDSCV is shown in Figure 4. 

\medskip 

\centerline{Figure 4 Here.}  

\medskip

Suppose we are given an independently and identically distributed $m$ dimensional dataset $D$ of $n$ instances, where $D\in\mathfrak {\Bbb R}^{n\times m}$. Each instance in $D$ consists of a class label vector $Y$ and vectors of measured feature covariates $X$. As discussed in the previous section, $D$ is sorted before subsampling. However, it is difficult to decide along which axis to sort $D$. An arbitrary yet effective way is to transform $D$ from multi-dimensional space to one-dimensional space using a dimensionality-reduction technique $H$, which produces a linear transformation $T\in\mathfrak {\Bbb R}^{m\times1}$ and results in $E=DT$, where $E\in\mathfrak {\Bbb R}^{n\times1}$ and $E$ is the underlying structure of $D$ and points to a direction where there is the most variance and where the elements in $E$ are most spread out. 

Simply stated, it is just like to project all dimensions of a dataset on the component vector $E$, which is a combination of all dimensions with weights. The instances with close or the same values in $E$ are considered similar in the terms of all dimensions but not just of label classes. And $E$ can be served as an axis along which $D$ is sorted so that instances that are similar to some extent are then gathered and separated from dissimilar ones. Therefore, we ascendingly sort $E$ as $\widetilde E$ and then sort $D$ as $\widetilde D$, so that $\widetilde E=\widetilde DT$. These $\widetilde E$ and $T$ are then discarded and will not be included in subsequent analyses. 

Next, we generate a BDS with $n$ elements and then a ranking vector $R$ accordingly. We partition $\widetilde D$ into $k$ subsets, using the same allocation mechanism in \eqref{E2.4}. 

Thirdly, the $k$ subsets of $\widetilde D$ are used to conduct a standard $k$-fold CV procedure with a classifier $A$. Let $F_k$ be the $k$-th fold that serves as the test set, and ${\widetilde D}_k$ be the training set obtained by removing the instances in $F_k$ from $\widetilde D$. The estimated \textit{EPE} over $k$ folds with the dimensionality-reduction technique $H$ is estimated as in \cite{Molinaro2005}: 
\begin{eqnarray}\label{E2.10}
EPE(\widetilde D)_{H,A}=\frac1K\sum_{k=1}^{K}\widehat{EPE(F_k)}. 
\end{eqnarray}
And the variance of the CV estimator is estimated as in \cite{Molinaro2005}: 
\begin{eqnarray}\label{E2.11}
\sigma_{H,A}^{2}=\frac1K \sum_{k=1}^{K}(EPE(F_k)_{H,A}-EPE(\widetilde D)_{H,A})^{2}. 
\end{eqnarray}

Finally, different dimensionality-reduction techniques may lose different amounts of information from the complete dataset \cite{Cunningham2015}. Sweeping different candidate techniques, such as Principal Component Analysis (PCA), Factor Analysis (FA), kernel PCA (kPCA), or truncated Singular Value Decomposition (tSVD), will enable us to fine tune the BDSCV procedure, so that the original space of $D$ can be reduced to one-dimensional space while retaining the most important information as much as possible in $E$. Therefore, previous steps are repeated using different dimensionality-reduction techniques in order to find the best $H$ in terms of $EPE(\widetilde D)_{H,A}$ and its corresponding $\sigma _{H,A}^{2}$ as the final CV results for the classifier $A$. 

\section{Experiments}
We first test the logistic regression model in Section 2.2 with the proposed 10-fold BDSCV. The model achieves zero error rate in every fold, which suggests that the BDSCV can eliminate subsampling bias to a great extent. We then test the BDSCV with the Penn Machine Learning Benchmark (PMLB) datasets \cite{Olson2017}. Note that we eliminate 10 PMLB datasets that are too big to be computed on a personal computer. The characteristics of these 156 datasets are summarized in Table 2. We compare the performance of our proposed BDSCV to MCCV, stratified MCCV and LOO.

\medskip 

\centerline{Table 2 Here.}  

\medskip 

The MCCV and stratified MCCV procedures are conducted using the \textit{scikit-learn} cross-validation module (version 0.19.2) with 50 repetitions for the three classifiers respectively. Our proposed BDSCV is mainly built on the same \textit{scikit-learn} module except the subsampling part. We use the MCCV as the benchmark and compute the following three comparative performance metrics as: 

(a) the comparative \textit{EPE} ratio of a classifier $A$, which is 
\begin{eqnarray}\label{E4.1}
\varphi EPE_A=\frac{EPE_{BDSCV/Stratified/LOO_A}}{EPE_{MCCV_A}}; 
\end{eqnarray}
(b) the comparative variance ratio of a classifier $A$, which is 
\begin{eqnarray}\label{E4.2}
\varphi\sigma _A^{2}=\frac{\sigma _{BDSCV/Stratified/LOO_A}^{2}}{\sigma _{MCCV_A}^{2}};
\end{eqnarray}
and (c) the comparative computational burden ratio of the $k$-fold CV procedures, which is 
\begin{eqnarray}\label{E4.3}
\varphi Time=\frac{Time_{BDSCV/Stratified/LOO}}{Time_{MCCV}}. 
\end{eqnarray}

Our experiments are conducted on a desktop personal computer (CPU: Intel Core i7-7700 eight cores, Memory: 64G, and SSD: 256GB). Note that computing time may slightly vary depending on the overall load and specifications of a computer. Table 3 shows that the number of datasets in which each of the four CV procedures achieve the same lowest \textit{EPE}s and variances (it is possible that multiple CV procedures obtain the lowest performance metrices in a dataset). As expected, the BDSCV achieves the lowest \textit{EPE}s and variances in most datasets, followed by the LOO and the stratified MCCV. Table 3 suggests that there are significant subsampling biases among different CV procedures and if we count on the most widely used 10-fold MCCV to do model selections, then we probably make the wrong choices.  

\medskip 

\centerline{Table 3 Here.}  

\medskip 

We summarize the three comparative performance metrics in Table 4. The \textit{EPE}s, variances and computational time of each classifier and of each dataset can be found in Supplementary Material. 

The comparisons between the BDSCV and the MCCV are as follows (see Panel A in Table 4). On average, the $\textit{EPE}\textit{\ensuremath{_{Logistic}}}$ and $\sigma _{Logistic}^{2} $ of the former is 94.04\% and 70.30\% of that of the latter respectively; the $\textit{EPE}\ensuremath{_{\textit{DecisionTree}}}$ and $\sigma _{DecisionTree}^{2} $ of the BDSCV is 88.02\% and 77.50\% of that of the MCCV respectively; and the $\textit{EPE}\ensuremath{_{\textit{Na{\"{\i}}ve Bayes}}}$ and $\sigma_{\textit{Na{\"{\i}}ve Bayes}}^{2} $ of the former is 96.41\% and 72.00\% of that of the latter respectively. On average, the BDSCV can lower the \textit{EPE}s around 7.18\% and the variances around 26.73\% regardless classifiers. In comparison, the stratified MCCV can slightly reduce subsampling bias by decreasing the \textit{EPE}s of the MCCV around 1.58\% and the variances around 11.85\%. The LOO can lower the \textit{EPE}s around 2.50\% but its variances are much higher than the any other CV procedure (see Panel B and C in Table 4). The computational time of the BDSCV is just 8.64\% of the MCCV, 8.67\% of the stratified MCCV and 16.72\% of the LOO (see Panel D in Table 4). These results suggest that the proposed BDSCV outperforms the other CV procedures as it can extrude significant subsampling bias while using a small fraction of computational time of the other CV procedures. Being computationally efficient is especially important where it is expensive or impractical to train multiple models. 

Finally, our proposed CV procedure may not be a cure-all and be more pertinent to a dataset with particular characteristics. We compute the aspect ratio (i.e. the number of features divided by the number of instances) of the 156 datasets and divide them into four groups based on the quantiles of the aspect ratio. The first group includes 67 datasets whose number of features is smaller or equal to 1\% of number of instances. The second group includes 20 datasets whose percentages of number of features to number of instances are in the range of $(1\%,2\%]$. The third group includes 33 datasets whose aspect ratios are in the range of $(2\%,5\%]$. And the fourth group include 36 datasets whose aspect ratios are larger than 5\%. Table 4 shows that the BDSCV can significantly reduce subsampling bias in the third and fourth groups of datasets, as the \textit{EPE}s of the BDSCV are significantly smaller. These findings imply that the BDSCV could be particularly effective in analysing in bioscience datasets, which usually are characterized as high dimensionality and relatively small sample sizes \cite{Daz-Uriarte2006, Fu2005}.

In short, these experiments confirm that the MCCV's suffers a lot from subsampling bias. The BDSCV can reduce subsampling bias in the \textit{EPE}s and variances of the three classifiers to a significant extent. The improvements of the stratified MCCV are not as substantial as the BDSCV. The LOO does not have significant advantages compared to the other CV procedures. Both the stratified MCCV and the LOO are not as computationally efficient as the BDSCV. In addition, the decision tree is the most sensitive classifier to subsampling bias in terms of \textit{EPE}s, especially in the datasets with high aspect ratios (i.e. the $3^{\rm rd}$ and $4^{\rm th}$ groups) in which the BDSCV can deflate the $\textit{EPE}\ensuremath{_{\textit{DecisionTree}}}$ around 15\% on average. The logistic regression is the most sensitive classifier to subsampling bias in terms of variances, especially in the datasets with relatively low aspect ratio (i.e. the $1^{\rm st}, 2^{\rm nd}$ and $3^{\rm rd}$ groups) in which the BDSCV can reduce variances around 34\%.

\section{Discussion and Closing Remarks}
A statistical machine learning model could be affected by the idiosyncrasies of training data if the discrepancy between $k$ subsets is too high. As a result, the \textit{EPE}s and variances of a MCCV procedure are contaminated by subsampling bias and cannot reflect the true generalization performance of a model's inductive mechanism. The inflated metrics will interfere with data scientists making right decisions. Our experimental results confirm that subsampling bias in the MCCV procedure greatly jeopardizes its validity and reliability for model selection even after 50 repetitions. Our experiments also confirm that the subsampling bias can be reduced through Lohr's guidelines \cite{Lohr2009}{\textemdash} systematic subsampling from an ordered population with a skip. The BDSCV follows these guidelines so that it is relatively more precise, stable and computationally efficient than all the other three CV procedures. 

It is important to point out that the BDSCV is not meant to completely replace other CV procedures. As the results of our experiments suggest that the advantages of the different types of CV may depend on dataset characteristics and decision boundary. We recommend using the BDSCV in the datasets characterized with high aspect ratios (i.e. small dataset size and high dimensionality) and for the non-parametric classifier such as the decision tree. Besides, the BDSCV is much more computational efficient than the other CV procedures so it is suitable for the time-critical statistical machine learning problems. Finally, it is beneficial to estimate the true generalization performance of a learning function using different CV procedures in order to choose the model with the lowest \textit{EPE}s and variances. Further studies are also necessary to determinate what is the most appropriate CV procedure for a particular dataset and for a statistical machine learning model. 

In conclusion, this study sheds more light on the literature of CV and statistical machine learning model evaluation by proposing the BDSCV procedure, which is a new way to estimate a model's relatively objective generalizability by significantly reducing subsampling bias in existing CV procedures. It also helps the model to pick out the hidden underlying mapping between input-output vectors and to remain stable when the training set changes, leading to more precise estimations and more confident inferences. What is more, our proposed procedure is straightforward and computationally efficient. Unlike other information-preserving subsampling techniques \cite{Groot1999} that need to understand the complete dataset in great detail, our procedure does not make the CV computation more complex. Lastly, our subsampling method may replace the random mechanism used in many statistical machine learning algorithms. 

The Python source codes, Supplementary Material, and the 156 benchmark datasets are available at http://cssci.wh.sdu.edu.cn/bdscv

\Acknowledgements{
The authors would like to thank the referees for helpful suggestions on an earlier draft of the paper. 
The second author is supported by the National Science Foundation of China under Grant 11531008, the Ministry of Education of China under Grant IRT16R43, and the Taishan Scholar Project of Shandong Province. }


\newpage
\begin{figure}[h]
	\centering
	\includegraphics[width=0.7\linewidth]{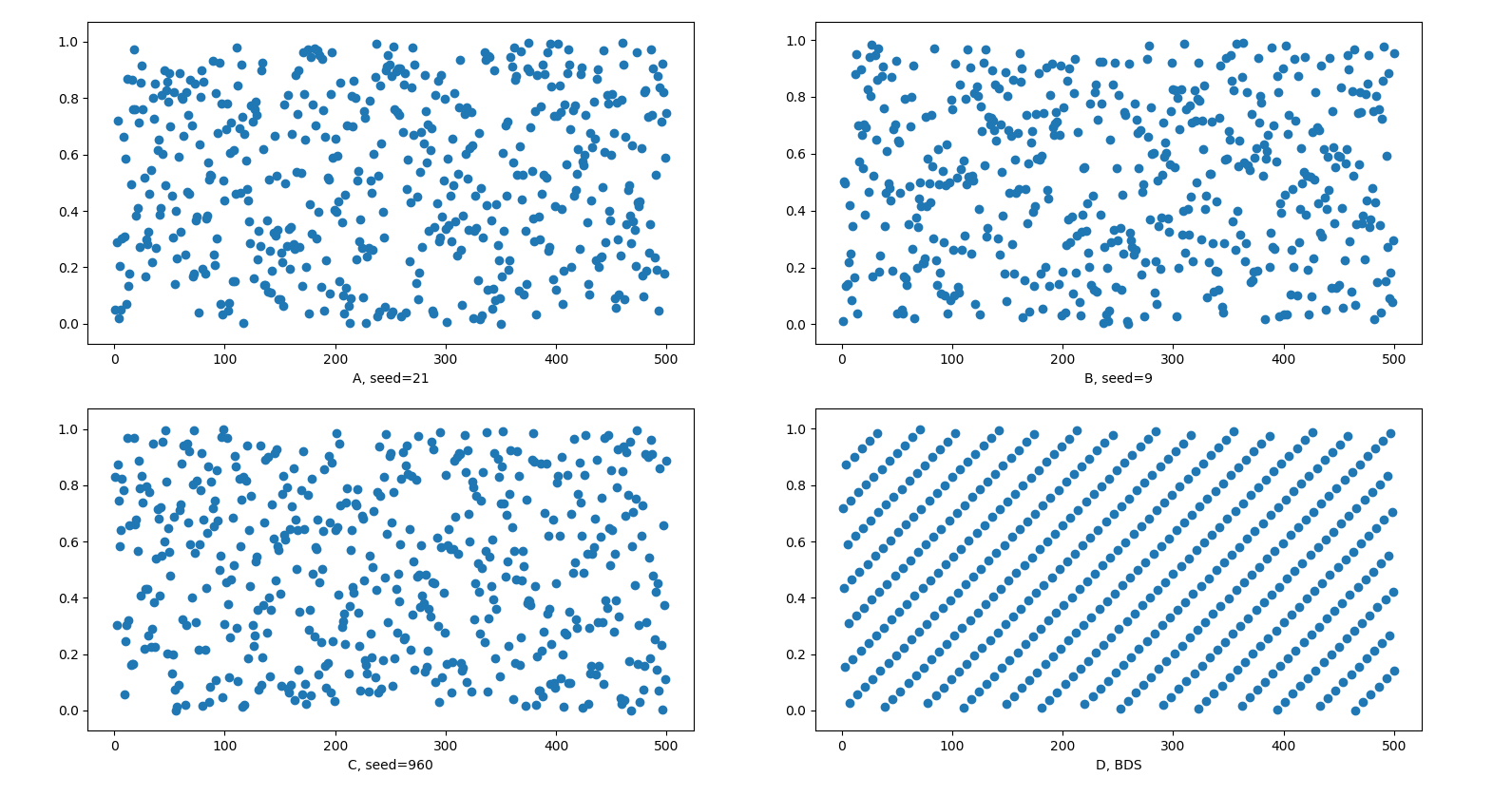}
	\caption{The Scatter Plots of Random Sequences and of BDS}
	\label{fig:Figure1}
\end{figure}

\begin{figure}[h]
	\centering
	\includegraphics[width=0.7\linewidth]{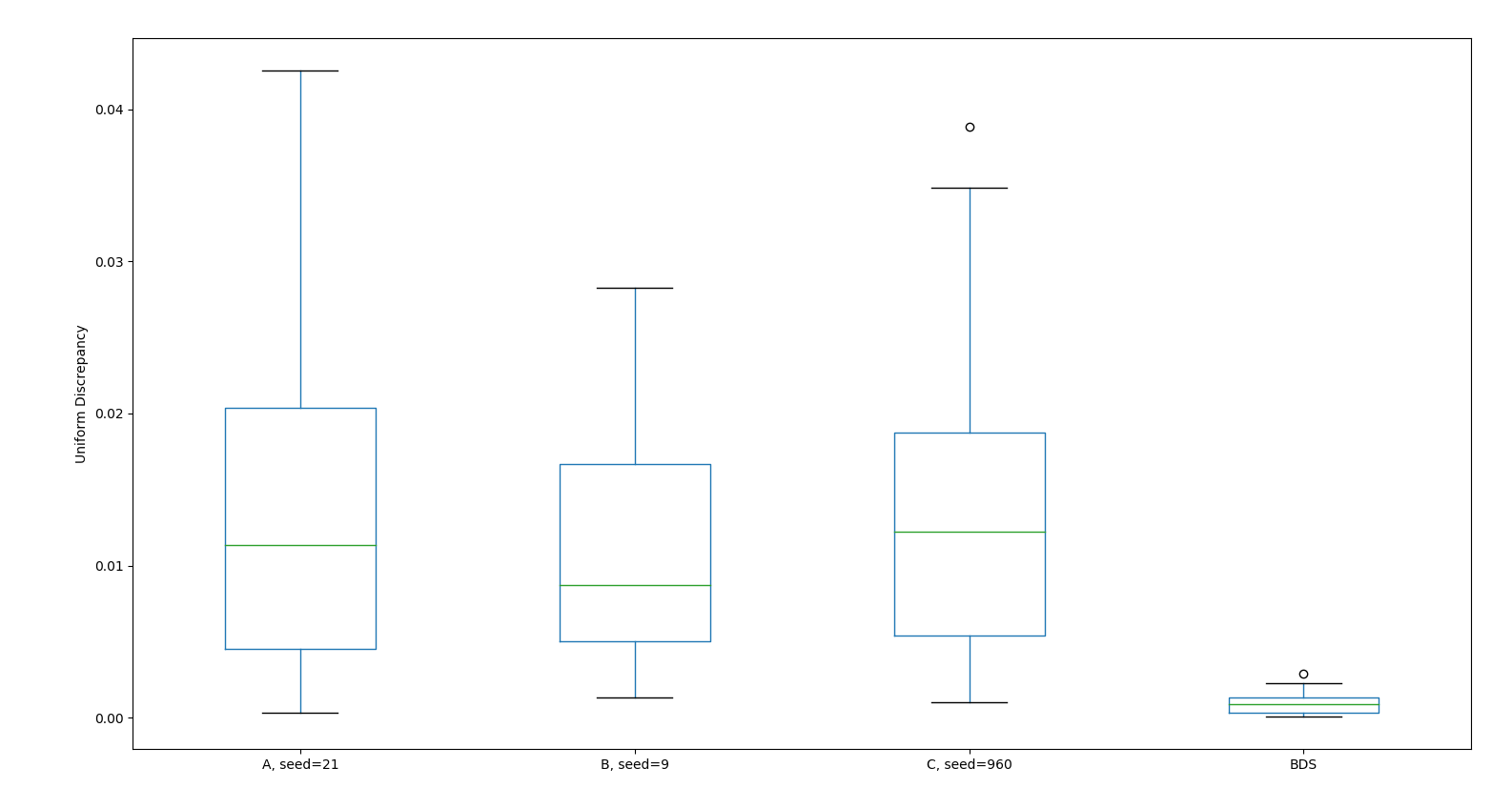}
	\caption{The Uniform Discrepancy of Random Sequences and of BDS}
	\label{fig:Figure2}
\end{figure}

\begin{figure}[h]
	\centering
	\includegraphics[width=0.7\linewidth]{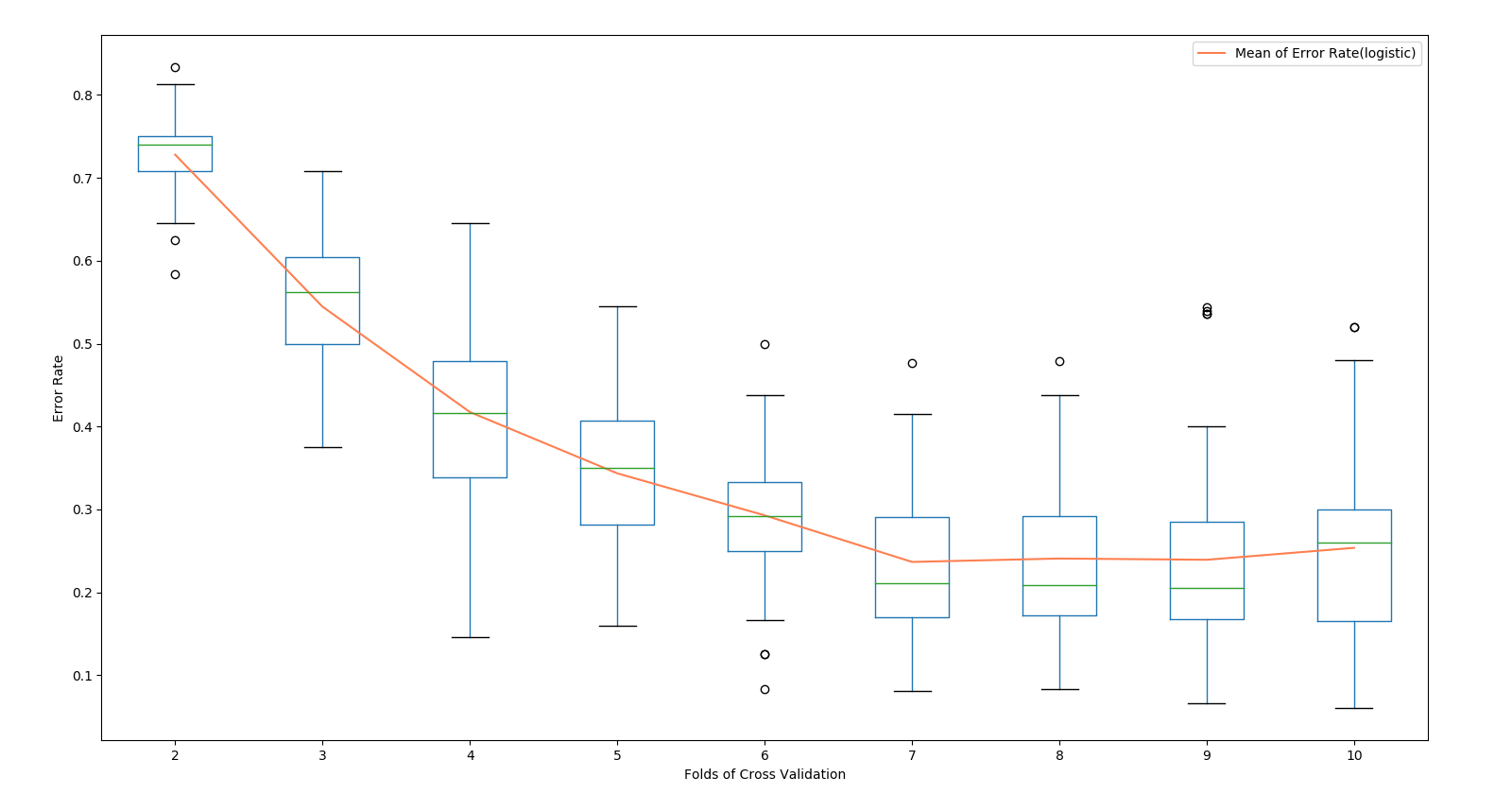}
	\caption{Error Rate in Conventional k-fold CV}
	\label{fig:Figure3}
\end{figure}

\begin{figure}[h]
	\centering
	\includegraphics[width=0.7\linewidth]{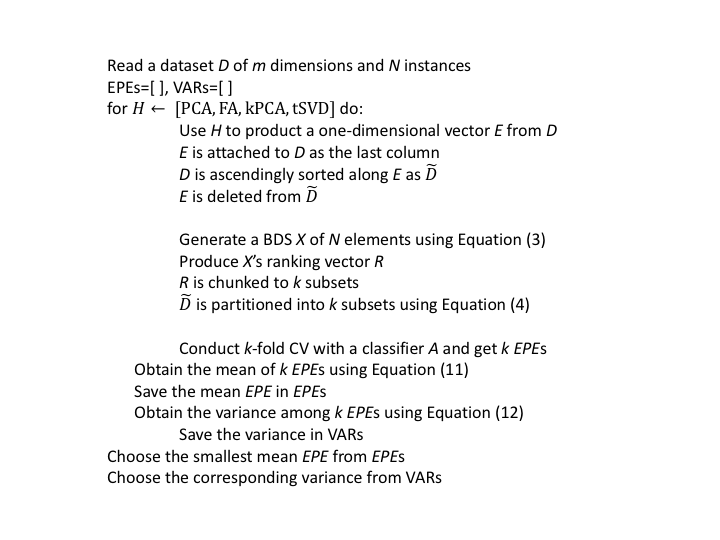}
	\caption{Pseudocode of the BDSCV Procedure}
	\label{fig:Figure4}
\end{figure}

\begin{table}[h]
	\centering
	\setlength{\abovecaptionskip}{0pt}
	\setlength{\belowcaptionskip}{10pt}
	\caption{Elements with the Best Discrepancy Systematic Subsampling Method}
	\begin{tabular}{ccccccc}
		\hline
		\multicolumn{2}{c}{BDS} & \multicolumn{2}{c}{R} & \multicolumn{3}{c}{ $\widetilde{D}$}               \\ \hline
		$X_{I}$    & $subset_{X_{I}}$ & $R_{I}$ &  $subset_{R_{I}}$ &  $\widetilde{D}_I$  &  $\widetilde{d}_i$ after partition & $subset_{d_{i}}$  \\ \hline
		0.7183     &            & 16      &             & 1.50$_{1}$  & 1.70$_{16}$            &        \\
		0.4366     &            & 10      &             & 1.53$_{2}$  & 1.67$_{10}$            &        \\
		0.1548     & I          & 4       & I           & 1.55$_{3}$  & 1.60$_{4}$             & I      \\
		0.8731     & $\mu_{1}$:\ 0.44       & 19      & $\mu_{1}$:\ 10          & 1.60$_{4}$  & 1.80$_{19}$            & $\mu_{1}$:\ 1.66   \\
		0.5914     & $\sigma_{1}^{2}$:\ 0.08       & 13      & $\sigma_{1}^{2}$:\ 36          & 1.62$_{5}$  & 1.68$_{13}$            & $\sigma_{1}^{2}$:\ 0.01   \\
		0.3097     &            & 7       &             & 1.63$_{6}$  & 1.64$_{7}$             &        \\
		0.028      &            & 1       &             & 1.64$_{7}$  & 1.50$_{1}$             &        \\ \hline
		0.7463     &            & 17      &             & 1.65$_{8}$  & 1.76$_{17}$            &        \\
		0.4645     &            & 11      &             & 1.66$_{9}$  & 1.67$_{11}$            &        \\
		0.1828     & II         & 5       & II          & 1.67$_{10}$ & 1.62$_{5}$             & II     \\
		0.9011     & $\mu_{2}$:\ 0.47       & 20      & $\mu_{2}$:\ 10          & 1.67$_{11}$ & 1.90$_{20}$             & $\mu_{2}$:\ 1.69   \\
		0.6194     & $\sigma_{2}^{2}$:\ 0.08       & 14      & $\sigma_{2}^{2}$:\ 36          & 1.68$_{12}$ & 1.69$_{14}$            & $\sigma_{2}^{2}$:\ 0.01   \\
		0.3377     &            & 8       &             & 1.68$_{13}$ & 1.65$_{8}$             &        \\
		0.0559     &            & 2       &             & 1.69$_{14}$ & 1.53$_{2}$             &        \\ \hline
		0.7742     &            & 18      &             & 1.70$_{15}$ & 1.78$_{18}$            &        \\
		0.4925     &            & 12      &             & 1.70$_{16}$ & 1.68$_{12}$            &        \\
		0.2108     & III        & 6       & III         & 1.76$_{17}$ & 1.63$_{6}$             & III    \\
		0.9291     & $\mu_{3}$:\ 0.50      & 21      & $\mu_{3}$:\ 12          & 1.78$_{18}$ & 1.92$_{21}$            & $\mu_{3}$:\ 1.70   \\
		0.6474     & $\sigma_{3}^{2}$:\ 0.08       & 15      & $\sigma_{3}^{2}$:\ 36          & 1.80$_{19}$ & 1.70$_{15}$            & $\sigma_{3}^{2}$:\ 0.01   \\
		0.3656     &            & 9       &             & 1.90$_{20}$ & 1.66$_{9}$             &        \\
		0.0839     &            & 3       &             & 1.92$_{21}$ & 1.55$_{3}$             &        \\ \hline
		SSW        & 1.6678     &         & 756         &        & 0.2168            &        \\
		SSB        & 0.0109     &         & 14          &        & 0.0082            &        \\
		ICC        & -0.1591    &         & -0.1455     &        & -0.1241           &        \\ \hline
	\end{tabular}
\end{table}

\begin{table}[h]
\centering
\setlength{\abovecaptionskip}{0pt}
\setlength{\belowcaptionskip}{10pt}
\caption{Descriptive Statistics of 156 PMLB Datasets}
\begin{tabular}{ccccc}
	\hline
	& \#Instances & \#Features & \#Labels & Aspect Ratio \\ \hline
	Minimum & 32          & 2          & 2        & 0            \\
	Maximum & 28056       & 240        & 26       & 0.54         \\
	Mean    & 2056.34     & 24.77      & 3.83     & 0.05         \\
	Median  & 2000        & 9          & 2        & 0.01         \\ \hline
\end{tabular}
\end{table}

\begin{table}[h]
\centering
\setlength{\abovecaptionskip}{0pt}
\setlength{\belowcaptionskip}{10pt}
\caption{Number of Datasets that Each CV Reaches the Lowest EPEs and Variances}
\begin{threeparttable}
\begin{tabular}{ccccccccccccc}
	\hline
	\multirow{2}{*}{Classifier} &  & \multicolumn{2}{c}{\textit{BDSCV}} & \textit{} & \multicolumn{2}{c}{\textit{MCCV}} & \textit{} & \multicolumn{2}{c}{\textit{Stratified MCCV}} & \textit{} & \multicolumn{2}{c}{\textit{LOO}} \\ \cline{3-4} \cline{6-7} \cline{9-10} \cline{12-13}
	&  & \textit{EPE}     & \textit{Var}    & \textit{} & \textit{EPE}    & \textit{Var}    & \textit{} & \textit{EPE}          & \textit{Var}         & \textit{} & \textit{EPE}    & \textit{Var}   \\ \hline
	$L^{1}$                          &  & 118              & 101             &           & 8               & 15              &           & 14                    & 43                   &           & 23              & 2              \\
	$DT^{2}$                         &  & 115              & 102             &           & 8               & 21              &           & 12                    & 43                   &           & 43              & 12             \\
	$NB^{3}$                         &  & 125              & 94              &           & 4               & 8               &           & 8                     & 55                   &           & 22              & 2              \\ \hline
\end{tabular}
\begin{tablenotes}
	\footnotesize
	\item[1] L: Logistic
	\item[2] DT: Decision Tree
	\item[3] NB: Native Bayes
\end{tablenotes}
\end{threeparttable}
\end{table}

\begin{table}[h]
\centering
\setlength{\abovecaptionskip}{0pt}
\setlength{\belowcaptionskip}{10pt}
\caption{Comparisons between Different CV Procedures}
\begin{threeparttable}
\begin{tabular}{ccccccc}
	\hline
	\multicolumn{7}{c}{\textbf{A:  BDSCV VS MCCV}}                                           \\ \hline
	Aspect Ratio        & $EPE_{L}$         & $\sigma_{L}^{2}$           & $EPE_{DT}$         & $\sigma_{DT}^{2}$            & $EPE_{NB}$          & $\sigma_{NB}^{2}$            \\
	Q1      & 97.95\%  & 64.29\%    & 90.51\%  & 87.88\%    & 98.61\%  & 75.11\%    \\
	Q2      & 94.45\%  & 67.06\%    & 90.25\%  & 64.22\%    & 96.57\%  & 57.31\%    \\
	Q3      & 91.37\%  & 66.80\%    & 82.92\%  & 78.72\%    & 95.89\%  & 72.47\%    \\
	Q4      & 92.36\%  & 83.06\%    & 88.41\%  & 79.17\%    & 94.59\%  & 83.11\%    \\
	Average & 94.04\%  & 70.30\%    & 88.02\%  & 77.50\%    & 96.41\%  & 72.00\%    \\ \hline
	\multicolumn{7}{c}{\textbf{B:  Stratified MCCV VS MCCV}}                                                           \\ \hline
	Aspect Ratio        & $EPE_{L}$         & $\sigma_{L}^{2}$           & $EPE_{DT}$         & $\sigma_{DT}^{2}$            & $EPE_{NB}$          & $\sigma_{NB}^{2}$            \\
	Q1      & 99.48\%  & 72.02\%    & 98.91\%  & 91.23\%    & 99.59\%  & 80.99\%    \\
	Q2      & 97.96\%  & 83.79\%    & 99.53\%  & 93.58\%    & 99.91\%  & 89.41\%    \\
	Q3      & 97.25\%  & 90.41\%    & 94.35\%  & 93.08\%    & 98.79\%  & 85.73\%    \\
	Q4      & 98.40\%  & 94.25\%    & 97.62\%  & 93.73\%    & 99.25\%  & 85.59\%    \\
	Average & 98.28\%  & 85.12\%    & 97.60\%  & 92.90\%    & 99.39\%  & 86.43\%    \\ \hline
	\multicolumn{7}{c}{\textbf{C:  LOO VS MCCV}}                                                            \\ \hline
	Aspect Ratio        & $EPE_{L}$         & $\sigma_{L}^{2}$           & $EPE_{DT}$         & $\sigma_{DT}^{2}$            & $EPE_{NB}$          & $\sigma_{NB}^{2}$            \\
	Q1      & 99.72\%  & 44036.96\% & 93.16\%  & 36397.24\% & 100.01\% & 37667.87\% \\
	Q2      & 100.20\% & 7897.06\%  & 92.52\%  & 7693.12\%  & 100.36\% & 7000.07\%  \\
	Q3      & 97.43\%  & 4977.84\%  & 91.25\%  & 4084.19\%  & 101.24\% & 6016.48\%  \\
	Q4      & 98.25\%  & 4387.51\%  & 98.45\%  & 4411.73\%  & 97.44\%  & 3858.06\%  \\
	Average & 98.90\%  & 15324.84\% & 93.85\%  & 13146.57\% & 99.76\%  & 13635.62\% \\ \hline
	\multicolumn{7}{c}{\textbf{D:  Computational Time Comparisons}}                                                            \\ \hline
	Aspect Ratio        & $\frac{BDSCV}{MCCV}$         &  $\frac{Stratified}{MCCV}$          &$\frac{LOO}{MCCV}$        & $\frac{BDSCV}{Stratified}$  &  $\frac{LOO}{Stratified}$        & $\frac{BDSCV}{LOO}$             \\
	Q1      & 9.15\%   & 99.62\%    & 886.01\% & 9.19\%     & 889.44\% & 3.08\%     \\
	Q2      & 8.48\%   & 99.68\%    & 152.33\% & 8.51\%     & 153.16\% & 9.68\%     \\
	Q3      & 8.52\%   & 99.79\%    & 126.15\% & 8.54\%     & 126.61\% & 20.60\%    \\
	Q4      & 8.42\%   & 99.80\%    & 90.95\%  & 8.43\%     & 91.23\%  & 33.53\%    \\
	Average & 8.64\%   & 99.72\%    & 313.86\% & 8.67\%     & 315.11\% & 16.72\%    \\ \hline
\end{tabular}
\begin{tablenotes}
	\footnotesize
	\item[] L: Logistic
	\item[] DT: Decision Tree
	\item[] NB: Native Bayes
\end{tablenotes}
\end{threeparttable}
\end{table}

\end{document}